%% file: arxiv.tex
\input{includes}
\IEEEpeerreviewmaketitle

\begin{document}

\title{AURA: Asymptotically Optimal Uncertainty-Robust Replanning Algorithm for Kinodynamic Systems}

\author{Seyedali~Golestaneh,
        Zhuoyun~Zhong,
        Donghyung~Lee,
        and~Constantinos~Chamzas
  \thanks{%
   All authors are affiliated with the Department of Robotics Engineering, Worcester Polytechnic Institute (WPI), Worcester, MA 01609, USA {\tt\footnotesize \{sgolestaneh, zzhong3, dlee5, cchamzas\} @ wpi.edu}.
  }
}


\maketitle

\input{sections/0.Abstract}
\input{sections/1.Introduction}
\input{sections/2.RelatedWork}
\input{sections/3.ProblemStatement}
\input{sections/4.Methodology}
\input{sections/5.Theoretical}
\input{sections/6.Experiments}
\input{sections/7.Conclusion}

\bibliographystyle{IEEEtran_ShortURL}
\bibliography{sections/reference}

\newpage

%

\end{document}

%% file: includes.tex
\documentclass[journal]{IEEEtran}

\usepackage{amsthm}
\usepackage{mathtools}
\usepackage{amsmath,amssymb}

\usepackage[pdftex]{graphicx}
\usepackage{svg}
\svgsetup{inkscapelatex=true}
\svgsetup{pretex=\scriptsize}

\usepackage{todonotes}
\usepackage[
activate   = {true},
protrusion = true,
expansion  = true,
kerning    = true,
spacing    = true,
tracking   = false,
auto       = true,
selected   = true,
factor     = 1000,
stretch    = 35,
shrink     = 35,
]{microtype}

\usepackage{xcolor}
\usepackage[dvipsnames]{xcolor}
\definecolor{wpired}{RGB}{34, 153, 84}
\definecolor{rred}{RGB}{169, 50, 38}
\definecolor{ggreen}{RGB}{34, 153, 84}
\definecolor{bblue}{RGB}{36, 113, 163}
\definecolor{ppurple}{RGB}{125, 60, 152}
\definecolor{yyellow}{RGB}{214, 137, 16}
\definecolor{ggrey}{RGB}{112, 123, 124}

\usepackage[pdfa,
  colorlinks,
  bookmarks=true,bookmarksopen,bookmarksnumbered,
  allcolors=wpired]{hyperref}

\usepackage[linesnumbered,ruled,vlined]{algorithm2e}

\usepackage[nameinlink,noabbrev]{cleveref}

\Crefname{algocf}{Algorithm}{Algorithms}

\usepackage{cite}
\usepackage[caption=false,font=normalsize,labelfont=sf,textfont=sf]{subfig}
\usepackage{dblfloatfix}

\usepackage{xspace}
\newcommand{\method}{\textsc{Aura}\xspace}

\newcommand{\R}[1]{{\ensuremath{\mathbb{R}^{#1}}}\xspace}

\usepackage{todonotes}

\SetKwInput{KwIn}{Input}
\SetKwFunction{Plan}{Plan}
\SetKwFunction{ContinuePlan}{ContinuePlan}
\SetKwFunction{Prop}{Propagate}
\SetKwFunction{OptBatch}{OptimizeBatch}
\SetKwFunction{Near}{Nearest}
\SetKwFunction{Shift}{Shift}
\DontPrintSemicolon
\newcommand{\Call}[1]{\text{\textsc{#1}}}

\newtheorem{assumption}{Assumption} 
\newtheorem{lemma}{Lemma} 
\newtheorem{proposition}{Proposition} 

\newcommand{\X}{\ensuremath{\mathcal{X}}\xspace}
\newcommand{\U}{\ensuremath{\mathcal{U}}\xspace}
\newcommand{\T}{{\ensuremath{\mathcal{T}}}\xspace}

\usepackage{graphicx}
\usepackage{float}
\usepackage{dblfloatfix}
\usepackage{placeins}
\usepackage{afterpage}
\usepackage{wrapfig}
\usepackage{multirow}

%% file: sections/0.Abstract.tex
\begin{abstract}
\hyphenpenalty=10000
\exhyphenpenalty=10000
Sampling-based motion planners offer a practical and scalable approach to kinodynamic motion planning, notably for high-dimensional, underactuated, or non-holonomic systems. However, these planners are typically used offline, requiring execution to begin only after the trajectory has been computed. In addition, the planned trajectory may not be accurately tracked in the presence of motion uncertainty, leading to deviations from the nominal solution.
In this work, these limitations were addressed within a unified framework, \method, an asymptotically-optimal meta-planner framework that improves both path quality and tracking performance during execution.
In addition to the main execution thread, this framework comprises a replanning method that continuously explores the state space and refines the trajectory during execution, and an optimization process that refines future control inputs to reduce tracking error.
Together, these components enable \method to leverage asymptotically optimal planning online while improving execution accuracy under uncertainty.
%
The proposed approach is evaluated in both simulation and real-world environments across multiple systems, demonstrating consistent improvements in trajectory quality, tracking accuracy, and overall performance compared with baseline methods.
\end{abstract}
\begin{IEEEkeywords}
Motion and Path Planning, Kinodynamic Planning, Planning Under Uncertainty
\end{IEEEkeywords}

%% file: sections/1.Introduction.tex
\section{Introduction}

\IEEEPARstart{S}{ampling-based} kinodynamic planners provide a general and theoretically sound framework for motion planning in systems with complex dynamics~\cite{sampling-based}. By relying solely on forward propagation, they avoid the need for steering functions, which are often intractable for arbitrary dynamical systems. Moreover, asymptotically optimal (AO) variants~\cite{asymptotically-optimal} offer \emph{probabilistic completeness} and \emph{asymptotic optimality}, ensuring that the probability of finding a feasible trajectory approaches one and that the cost converges almost surely to the global optimal as the number of samples increases.

Despite the strong theoretical guarantees, two practical issues hinder the execution of trajectories computed by kinodynamic planners. Consider the kinodynamic planning problem of pushing an object in~\autoref{fig:intro}. First, the initial kinodynamic trajectory is often \emph{suboptimal} due to limited planning time. Second, \emph{motion uncertainty}, arising from model mismatch and actuation errors, inevitably induces deviation from the initial trajectory during open-loop execution.

\begin{figure}[ht!]
    \centering
    \includegraphics[width=0.95\columnwidth]{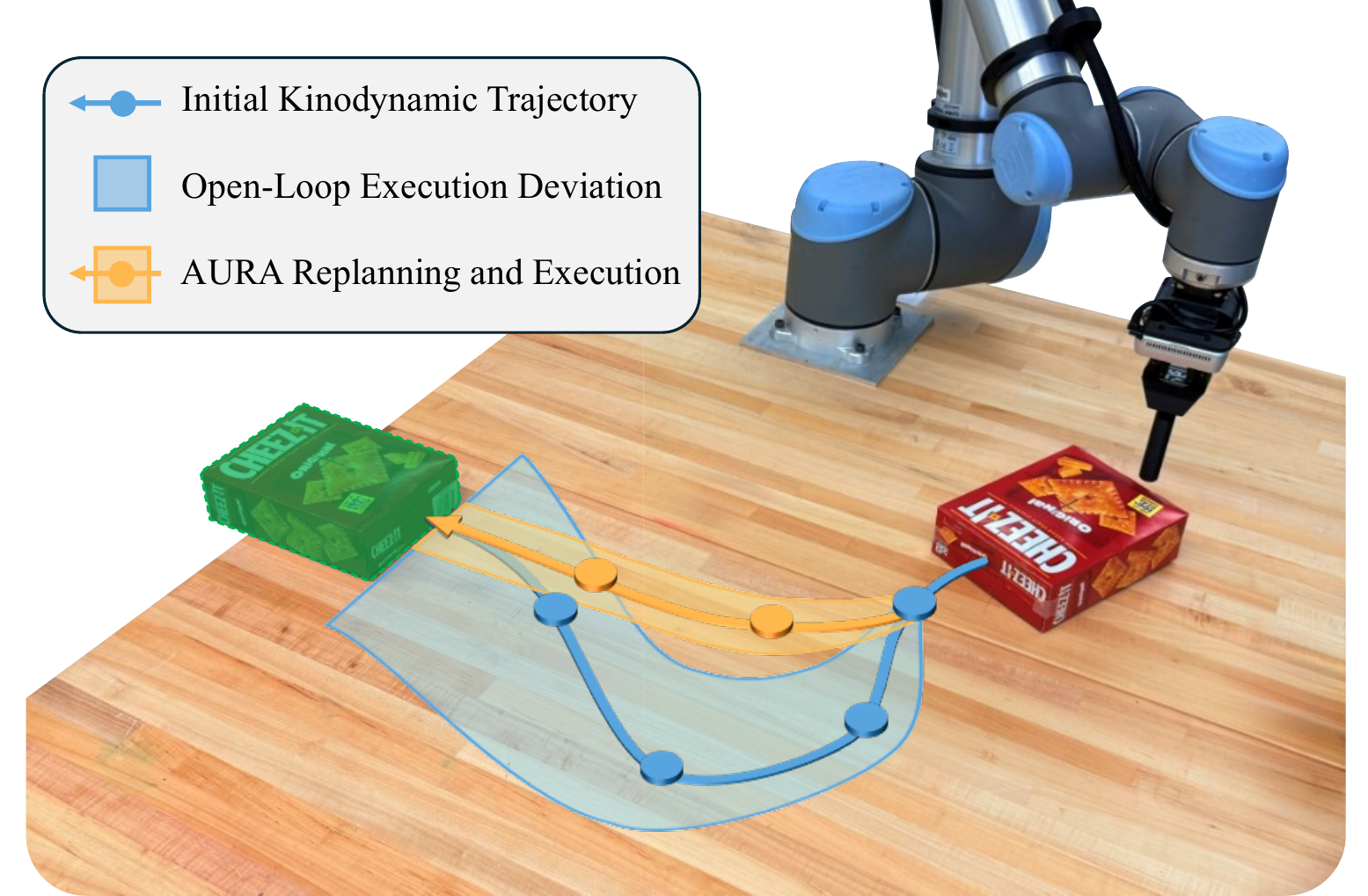}
    \caption{\textbf{Illustration of open-loop execution and \method for non-prehensile manipulation.} Under a limited time budget and with an approximated dynamics model, open-loop execution of the planned path yields a suboptimal trajectory and accumulates large execution deviation (\textcolor{cyan}{blue}). In contrast, \method performs online global replanning and local optimization, resulting in a better overall trajectory and smaller execution error (\textcolor{YellowOrange}{orange}).}
    \label{fig:intro}
\end{figure}

Online replanning strategies\mbox{\cite{ramp,rrt-x,kraft}} attempt to handle trajectory suboptimality by recomputing trajectories from the latest observed state. However, for kinodynamic AO planners, frequent replanning is computationally expensive and may discard prior exploration progress. Furthermore, without a steering function, reconnecting to the original search tree is not trivial~\cite{sst}.

Meanwhile, receding-horizon control~\cite{original-mpc, robust-control, dob} mitigates motion uncertainty through closed-loop tracking of a reference trajectory. However, the reference is typically fixed and not improved during execution. Moreover, these methods often rely on high-frequency state feedback~\cite{high-mpc}. In certain tasks—such as the non-prehensile manipulation scenario shown in~\autoref{fig:intro}—state observations may only be available intermittently (e.g., after a pushing action is executed due to in-hand sensing).

In this work, we propose \method, \textbf{A}symptotically Optimal \textbf{U}ncertainty-Robust \textbf{R}eplanning \textbf{A}lgorithm that combines the global exploration and asymptotic guarantees of AO kinodynamic planners~\mbox{\cite{sst,ao-rrt}} with the local robustness of receding-horizon control. Rather than treating planning and execution independently, \method operates as a meta-layer on top of any forward-propagation-based AO planner, enabling: (i) continuous online refinement of trajectory quality during execution, and (ii) local control optimization to mitigate execution deviation. Concretely, the contributions of \method are:
\begin{itemize}
    \item A novel online replanning framework for kinodynamic AO planners that continuously improves trajectory quality without  requiring a steering function.
    \item A GPU-accelerated local control optimization module to mitigate execution deviation up to 72\%, and a proof that a recovering control always exists under mild assumptions.
    \item Comprehensive evaluation across multiple dynamics models in simulated and real-world tasks, including learned models, demonstrating up to 50\% reduction in total task time over receding-horizon and replanning baselines.
\end{itemize}

%% file: sections/2.RelatedWork.tex
\section{Related Work}
This section reviews the prior work most relevant to \method from three perspectives: asymptotically optimal kinodynamic planning, online replanning, and robust control and optimization under uncertainty. 
Our method lies at the intersection of these areas by combining an AO kinodynamic planner with global replanning and local optimization.

\subsection{Asymptotically-Optimal Kinodynamic Planners}
Sampling-based planners for kinodynamic systems must satisfy system dynamics and constraints~\cite{planning-book}. These methods have been successfully applied to a variety of systems, while avoiding the need for an explicit steering function~\cite{kpiece}.
Among such methods, SST\textsuperscript{*} achieves asymptotic optimality through pruning and best-node selection~\cite{sst}.
More generally, the meta-algorithm \mbox{AO-X}~\cite{ao-rrt} lifts any kinodynamic planner, e.g., RRT~\cite{rrt} or EST~\cite{est}, into asymptotically optimal planners by augmenting the search in state-cost space. 

However, AO planners guarantee convergence to the optimal path only in probability as computation time increases~\cite{asymptotically-optimal}. 
Therefore, under limited time scenarios, the resulting solution may remain substantially suboptimal.
To address this challenge, \method exploits the extra time during execution, enabling the planner to further refine and improve the trajectory quality.

\subsection{Online Replanning}
AO planning is typically cast as an offline problem, as the planning terminates once a trajectory is selected for execution.
In contrast, real-time planning methods are designed to couple planning and execution, enabling online adaptation based on the current state during execution, but generally fail to offer optimality~\cite{2008-replanning}.
Early work in this direction includes RAMP's extension to non-holonomic kinodynamic systems~\mbox{\cite{ramp, ramp-nonholonomic}}, which leverages replanning during execution to adapt to environmental changes and action uncertainty.
Moreover, RRT\textsuperscript{X}~\cite{rrt-x} enables reconnecting to the original tree at runtime in the presence of environment changes.
However, these methods require a steering function for local rewiring.

A recently proposed kinodynamic replanning framework, KRAFT~\cite{kraft}, avoids the need for the steering function. Nevertheless, it re-propagates the full tree after the state deviates from the nominal trajectory, which cannot guarantee the collision-free execution without access to the true dynamics model. In contrast, \method computes a recovery control that steers the system back to the collision-free planned trajectory without discarding the planning progress or having access to the true dynamics of the model.
%
%
%

\subsection{Control and Optimization under Uncertainty}
On the execution side, a broad class of control and optimization methods has been developed to improve tracking accuracy and robustness under uncertainty. Representative examples include Model Predictive Control (MPC)~\cite{original-mpc}, robust control~\cite{robust-control}, and disturbance-observer-based control (DOB)~\cite{dob}. 
Among these, MPC and its variants can explicitly handle system dynamics and constraints~\cite{receding-horizon, complicated-dynamics-mpc}. Related extensions include differentiable MPC formulations that enable online parameter adaptation~\cite{diff-mpc}, and sampling-based stochastic control methods, such as MPPI~\cite{mppi}. 
Despite their strong execution performance, they are susceptible to local minima in long-horizon tasks.

A common solution is to follow a pre-planned reference trajectory~\cite{tube-mpc, navigation-mpc}, but these trajectories remain fixed throughout execution, and are not refined.
%
\method combines AO kinodynamic planning with predictive optimization during execution, to improve execution accuracy under uncertainty while continuing to refine the overall trajectory quality.

%% file: sections/3.ProblemStatement.tex
\begin{figure*}[t]
    \centering
    \includegraphics[width=0.90\textwidth]{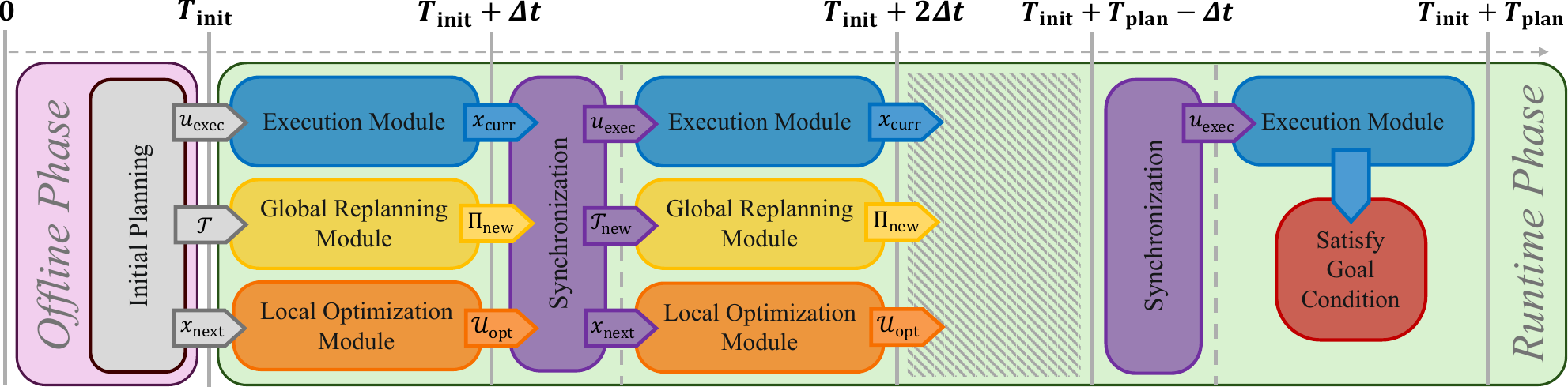}
    \caption{\textbf{Time frame of \method algorithm}, which includes offline and runtime phase. The offline phase provides an initial guide for the next phase. At runtime, three modules work concurrently for each $\Delta t$ interval: \textcolor{MidnightBlue}{(a)} Execution module applies the next control and returns the resulting state, \textcolor{Dandelion}{(b)} Global replanning module expands the existing planning tree, \textcolor{orange}{(c)} Optimization module predicts the possible future controls based on the next state. \textcolor{Purple}{(d)} At the end of the cycle, all the data is synced to provide the inputs for the next cycle.}
    \label{fig:overview}
\vspace{-1ex}
\end{figure*}

\section{Problem Statement}
In this work, the kinodynamic motion planning problem under bounded execution uncertainty is considered.
Let $\X$ and $\U$ denote the state space and control space, respectively. The state space $\X$ is partitioned into two disjoint subsets, $\X = \X_{\text{obs}} \;\cup\; \X_{\text{free}}$, where $\X_{\text{obs}}$ represents the invalid state region and $\X_{\text{free}}$ is the valid state region.
The \textit{nominal dynamics model} $f$ describes the evolution of the system:
\begin{equation}
\label{eq:continues-dynamics}
\begin{gathered}
x(T) = x(0) +\int_0^{T} f(x(t), u(t)) dt,
\end{gathered}
\end{equation}
where $x(t) \in \mathcal{X}$ and $u(t) \in \mathcal{U}$ denote the state and control input of the system at time $t$.
In practice, forward propagation is commonly performed over time intervals $\Delta t$ using piecewise-constant controls
~\cite{sampling-based}. Let $\Gamma(\cdot,\cdot)$ denote the \textit{nominal state transition function} over one such interval:
\begin{equation}
\label{eq:discrete-dynamics}
x_{i+1} = \Gamma(x_i, u_i) = x_i + \int_0^{\Delta t} f(x(t), u_i) dt, \quad x(0) = x_i
\end{equation}
where $x_i$ is the start state and $u_i$ is the constant control applied.

A \textit{trajectory} $\pi$ is defined as $\{x_0, u_0, \dots, x_{T-1}, u_{T-1}, x_T\}$, where $\boldsymbol{x}:\{x_i\}^T_{i=0} \to \X$ is the state trajectory generated by applying the control sequence $\boldsymbol{u}:\{u_i\}^{T-1}_{i=0} \to \U$ from $x_0$, satisfying~\autoref{eq:discrete-dynamics}. Let $J(\cdot)$ represent the \textit{cost function} that typically measures the quality of a trajectory, defined as:
\begin{equation}
\label{eq:cost_integral}
J(\pi) = \textstyle\sum\nolimits_{i=0}^{T-1} \ell(x_i, u_i),
\end{equation}
where $\ell: \mathcal{X} \times \mathcal{U} \rightarrow \mathbb{R}^+$ is the running cost (e.g., path length, time, or energy consumption).

During \textit{trajectory execution}, the true system state may deviate from the nominal state $x_{i+1}$ due to model mismatch, unmodeled dynamics, and actuation errors. This execution deviation is modeled as:
\begin{equation}
\label{eq:ground-truth-dynamics}
\begin{gathered} 
x^{\text{gt}}_{i+1} = \Gamma(x_i, u_i) + w_i,
\end{gathered}
\end{equation}
where $x^{\text{gt}}_{i+1}$ denotes the observed ground-truth system state after applying control $u_i$ over one interval of duration $\Delta t$, and $w_i$ is the \textit{bounded execution error}, satisfying $\|w_i\| \le \delta$. Accordingly, the \textit{tracking error} along trajectory $\pi$ is defined~as:
\begin{equation}
\label{eq:tracking_error}
E(\pi) = \textstyle\sum\nolimits_{i=0}^{T} \|x_i - x_i^{\mathrm{gt}}\|,
\end{equation}
where $x_i$ and $x_i^{\mathrm{gt}}$ denote the nominal and observed states after execution, respectively.

The uncertainty-robust kinodynamic planning problem is formalized as follows:
Given an initial state $x_{\text{start}}$, a goal region $\X_{\text{goal}}$, and the nominal propagation model $\Gamma$, \method aims to determine a nominal trajectory $\pi$, such that:
\begin{equation}
\label{eq:kino_traj}
\begin{gathered}
x_{i+1} = \Gamma(x_i,u_i), \forall i \in \{0,1,\dots,T-1\}, \\
x_0 = x_{\text{start}}, \qquad x_T \in \X_{\text{goal}}, \\
x_i, x_i^{\mathrm{gt}} \in \X_{\text{free}}, u_i \in \U, \forall i \in \{0,1,\dots,T\},
\end{gathered}
\end{equation}
while seeking to minimize the trajectory cost in~\autoref{eq:cost_integral} and the tracking error in~\autoref{eq:tracking_error}. Thereby, \method is designed to improve trajectory quality while maintaining robustness to bounded motion uncertainty during execution.

%% file: sections/4.Methodology.tex
\section{\method Framework}
The main idea of \method is shown in \autoref{alg:aura} and \autoref{fig:overview}. 
Given an AO planner $\mathcal{P}$ and the nominal forward propagator $\Gamma$, \method aims to produce a trajectory that satisfies the goal condition while minimizing a cost function and execution error.
In the \textit{Offline Phase}, a set of initial trajectories \mbox{$\Pi = \{ \pi_1, \pi_2, ..., \pi_N \}$} is computed by the planner $\mathcal{P}$ within planning time $T_{\text{init}}$, and the best trajectory is selected to start execution
(\autoref{alg:aura}, \autoref{aura:plan_init} - \autoref{aura:get_next_state1}). 
Afterward, the \textit{Runtime Phase} begins, which consists of three parallel modules running together: \textbf{(a)} Execution Module, \textbf{(b)} Global Replanning Module, and \textbf{(c)} Local Optimization Module. Rather than operating sequentially, these modules run concurrently during trajectory execution. This procedure is repeated iteratively until the system reaches the goal region.
%
\subsection{Execution Module}
This module, shown as blue blocks in~\autoref{fig:overview}, is responsible for executing the first control action $u_{\text{exec}}$ of the current best trajectory and observing the resulting ground-truth state $x_{\text{curr}}^{\text{gt}}$. The control is applied to the current state for a fixed duration~$\Delta t$, during which the system evolves based on~\autoref{eq:ground-truth-dynamics} (\autoref{alg:aura}, \autoref{aura:apply_control}).

In simulation, the system evolution is computed by the physics engine after applying the control. In the real world, the consequent state results directly from the system's physical response to the executed control. In both cases, the resulting ground-truth state is then observed and passed to the synchronization module. In this work, the observation is assumed to be available without measurement noise.

{ 
\SetKwInput{KwOut}{Result}
\begin{algorithm}[t]
\caption{AURA Framework}
\label{alg:aura}
\KwIn{AO planner $\mathcal{P}$; Transition function $\Gamma (\cdot,\cdot)$; \allowbreak
      Start~state $x_{\text{start}}$; \allowbreak
      Goal~region $\mathcal{X}_{\text{goal}}$; \allowbreak
      Initial~planning~time $T_{\text{init}}$; \allowbreak
      Control~duration~$\Delta t$; \allowbreak
      Bounding~region $\X_\mathcal{B}$; \allowbreak}
\KwOut{Control sequence driving $x_{\text{start}}$ to $\mathcal{X}_{\text{goal}}$}
\BlankLine
$(\Pi, \mathcal{T}) \gets \Call{Plan}(\mathcal{P}, \Gamma, x_{\text{start}}, \mathcal{X}_{\text{goal}}, \text{time}=T_{\text{init}})$\; \nllabel{aura:plan_init}
$x_{\text{curr}}^{\text{gt}} \gets x_{\text{start}}$\; \nllabel{aura:set_curr}
$\pi^\star \gets \Call{GetBestPlan}(\Pi)$\; \nllabel{aura:get_best}
$u_{\text{exec}} \gets \Call{GetFirstControl}(\pi^\star)$\; \nllabel{aura:first_control}
$x_{\text{next}} \gets \Call{GetNextPlannedState}(\pi^\star)$\; \nllabel{aura:get_next_state1}
\BlankLine
\While{$x_{\text{curr}} \notin \mathcal{X}_{\text{goal}}$}{ \nllabel{aura:while_loop}
  \BlankLine
  \tcp*[l]{\textbf{Execution Thread}}
  $x_{\text{curr}}^{\text{gt}} \gets \Call{Execute}(x_{\text{curr}}^{\text{gt}}, u_{\text{exec}}, \Delta t)$\; \nllabel{aura:apply_control}
  \BlankLine
  \tcp*[l]{\textbf{Replanning Thread}}
  $(\Pi, \mathcal{T}) \gets \Call{Replanning}(\mathcal{P}, \Gamma, \mathcal{T}, x_{\text{next}}, \mathcal{X}_{\text{goal}}, \Delta t)$\; \nllabel{aura:replanning}
  \BlankLine
  \tcp*[l]{\textbf{Optimization Thread}} \label{alg1:optim}
  $(\mathcal{X}_{\text{child}}, \mathcal{U}_{\text{child}}) \gets \Call{GetChildren}(x_{\text{next}}, \mathcal{T})$\; \nllabel{aura:get_children}
  $\X_\mathcal{B} \gets \Call{SampleNearby} (x_{\text{next}}, \sigma)$ \; \nllabel{aura:sample_nearby}
  $\mathcal{U}_{\text{opt}} \gets \Call{Optimization}(\X_\mathcal{B}, \mathcal{X}_{\text{child}}, \mathcal{U}_{\text{child}}, \Gamma)$\; \nllabel{aura:optimization}
  \BlankLine
  \tcp*[l]{\textbf{Synchronization}}
  $\Call{AwaitThreads()}$\; \nllabel{aura:await_threads}
  $\Pi \gets \Call{RecalculateCost}(x_{\text{curr}}^{\text{gt}}, \Pi)$\; \nllabel{aura:recalc_cost}
  $\pi^\star \gets \Call{GetBestPlan}(\Pi)$\; \nllabel{aura:get_best2}
  $x_{\text{next}} \gets \Call{GetNextPlannedState}(\pi^\star)$\; \nllabel{aura:get_next_state2}
  $x^\star \gets \Call{Nearest}(x_{\text{curr}}^{\text{gt}}, \X_\mathcal{B}\cup\{x_{\text{next}}\})$\; \nllabel{aura:nearest}
  $u_{\text{exec}} \gets \mathcal{U}_{\text{opt}}[x^\star, x_{\text{next}}]$\; \nllabel{aura:set_control}
}
\end{algorithm}
}

\subsection{Global Replanning Module}
\label{sec:replanning}
As noted earlier, the key property of the AO planner $\mathcal{P}$ is asymptotic convergence to the optimal trajectory as the number of samples $N$ increases~\cite{asymptotically-optimal}.
%
%
This motivates resuming planning from the existing tree to find higher-quality trajectories~(\autoref{fig:replanning}). Two main operations occur in this module: \textit{Pruning} and \textit{Replanning}~(\autoref{aura:replanning} of~\autoref{alg:aura}, i.e.,~\autoref{alg:replanning}).

\textbf{Pruning:} After selecting the control to execute, the search tree needs to be updated to remain kinodynamically consistent.
As shown in~\autoref{fig:replanning}, during execution toward $x_1$, branches that are not descendants of $x_1$ (faded nodes) can no longer be used.
Finding a suitable control to connect $x_1$ to these nodes constitutes a \textit{2-Point Boundary Value Problem (2PBVP)} that requires a steering function to solve, to which this work does not assume access.
The next state on the current best solution, $x_1$, is set as the new root (\autoref{alg:replanning}, \autoref{plan:set_start}). Then all nodes and branches that are not descendants of this new root are removed to construct the tree~$\T_{\text{pruned}}$ (\autoref{alg:replanning}, \autoref{plan:prune}).

\textbf{Replanning:} The planner then continues refining the trajectory, and runs for the remaining execution interval $T_{\text{replan}}$ with a new start state, starting with the existing tree $\T_{\text{pruned}}$. As $\mathcal{P}$ is an AO planner, there is a non-zero probability that a lower-cost trajectory can be found~\cite{asymptotically-optimal}, an example of which is illustrated by the highlighted golden trajectory  in~\autoref{fig:replanning}.

\begin{figure}[t]
    \centering
    \includegraphics[width=0.85\columnwidth]{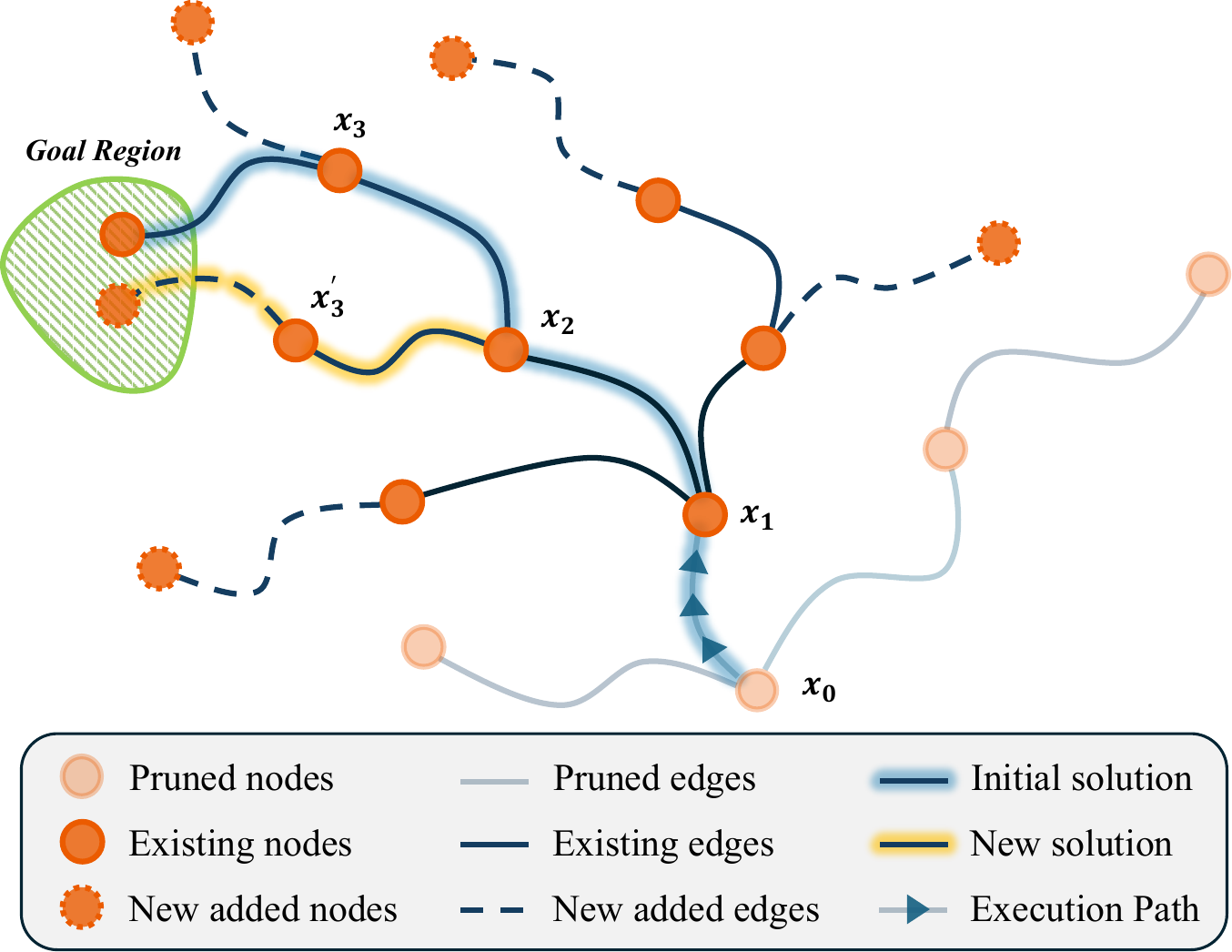}
    \vspace{-1ex}
    \caption{\textbf{An example of the tree adjustment during the runtime phase.} As the first segment of the solution is executed from $x_0$ to $x_1$, nodes and edges that are no longer reachable are pruned (faded), and new samples are added to the tree (dashed). If a new lower-cost solution (\textcolor{Dandelion}{golden}) is found, it is chosen as the current best path $\pi^{*}$.}
    \label{fig:replanning}
\end{figure}

\begin{algorithm}[t]
\caption{Replanning}
\label{alg:replanning}
\KwIn{AO planner $\mathcal{P}$; Transition function $\Gamma (\cdot,\cdot)$; \allowbreak
      Tree~$\mathcal{T}_{\text{curr}}$;
      Next state $x_{\text{next}}$; \allowbreak
      Goal region $\mathcal{X}_{\text{goal}}$;
      Replanning time $T_{\text{replan}}$;}
\KwOut{Updated plans $\Pi_{\text{new}}$, Updated tree $\mathcal{T}_{\text{new}}$}
$x_{\text{start}} \gets x_{\text{next}}$\; \nllabel{plan:set_start}
$\mathcal{T}_{\text{pruned}} \gets \Call{PruneUnreachableNodes}(\mathcal{T}_{\text{curr}}, x_{\text{start}})$\; \nllabel{plan:prune}
\BlankLine
$(\Pi_{\text{new}}, \mathcal{T}_{\text{new}}) \gets$ \textsc{Plan} $\!\big(\mathcal{P},x_{\text{start}}, \mathcal{X}_{\text{goal}}, \Gamma, T_{\text{replan}}, \mathcal{T}_{\text{pruned}}\big)$\; \nllabel{plan:plan_call}
\BlankLine
\Return $(\Pi_{\text{new}}, \mathcal{T}_{\text{new}})$ \nllabel{plan:return}
\end{algorithm}
\vspace{-2ex}

\begin{figure}[ht!]
  \centering
  \includegraphics[width=0.9\columnwidth]{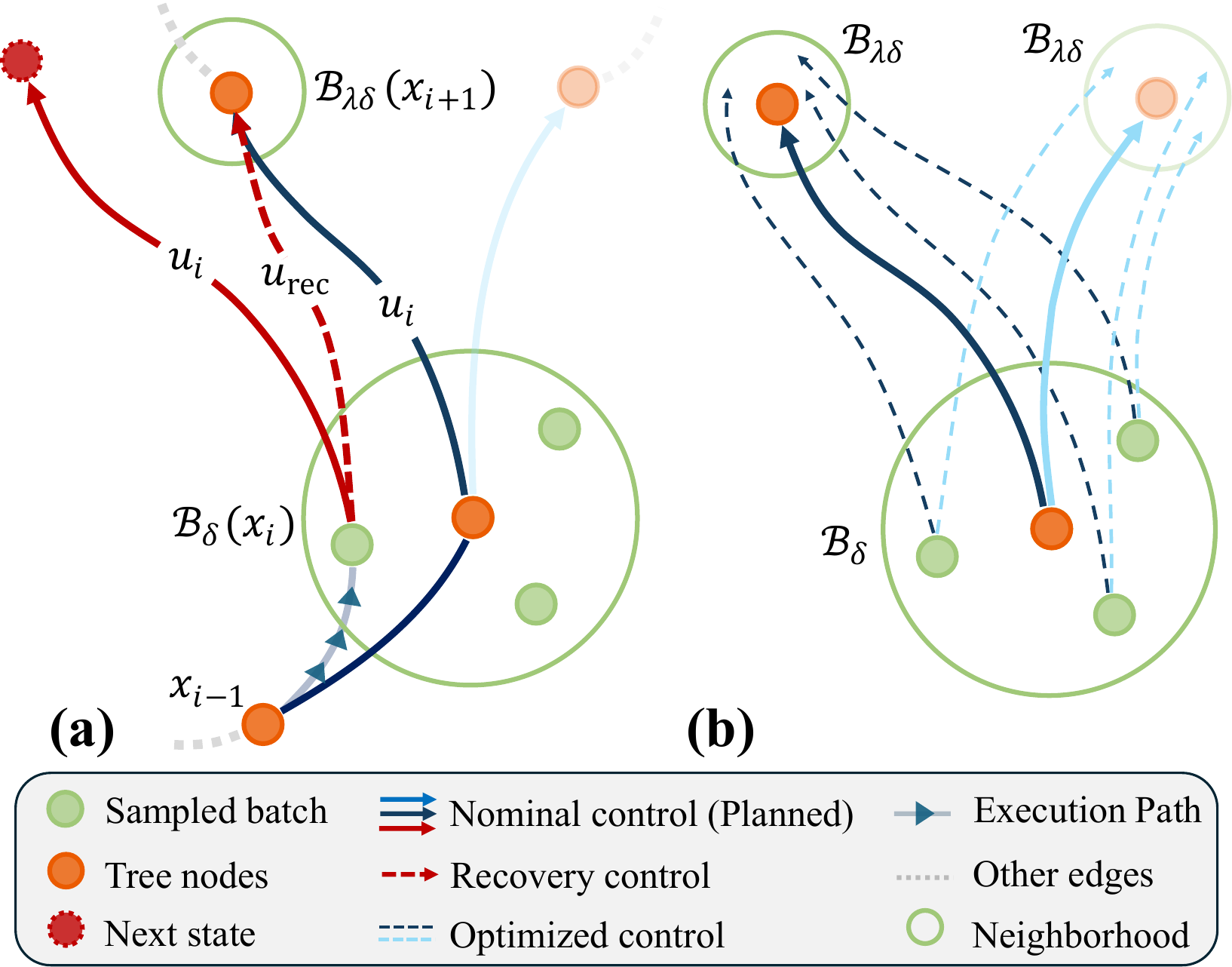}
  \vspace{-1ex}
  \caption{\textbf{Batched optimization process}. \textbf{(a)} This is a trajectory segment. When the executed state deviates from the desired state ($x_i$), applying the nominal control (red arrow) can further amplify the deviation. The optimization module seeks to approximate the optimal recovery control (dashed red arrow). \textbf{(b)} For each sample in the neighborhood of $x_i$ (\textcolor{YellowGreen}{green nodes}) and each child on the tree (\textcolor{orange}{orange nodes}), \method calculated an optimized control (dashed lines).}
  \label{fig:optimization}
  \vspace{-1ex}
\end{figure}

\subsection{Local Optimization Module}
\label{sec:aura-opt}
This module employs a novel precomputation-based optimization strategy to reduce the tracking error defined in~\autoref{eq:tracking_error}. In contrast to MPC and other receding-horizon methods, which compute an optimized control sequence after observing the updated system state, this method computes candidate recovery controls for the next cycle before the actual state is observed, during the current execution interval. The resulting optimized control set~$\U_{\text{opt}}$ serves as the candidate pool for selecting the execution control in the next cycle.

\begin{algorithm}
\caption{Optimization}
\label{alg:optimization}
\KwIn{Sampled local nodes $\X_\mathcal{B}$; Children nodes $\mathcal{X}_c$; Corresponding controls $\mathcal{U}$; Transition $\Gamma$;}
\KwOut{Optimized controls $\mathcal{U}_{\text{opt}}$}
\BlankLine
$(X_{\text{start}}, X_{\text{children}}, U^0) \gets \Call{AssembleBatch}(\X_\mathcal{B}, \mathcal{X}_c, \mathcal{U})$\; \nllabel{opt:assemble}
\For{$k \gets 0$ \KwTo $N$}{ \nllabel{opt:for_loop}
  $\mathcal{L} \gets \left\| \Gamma \!\left(X_{\text{start}},\,U^{k}\right) - X_{\text{children}}\right\|^2$\; \nllabel{opt:loss}
  $U^{k+1} \gets U^{k} - \alpha \ \cfrac{\partial \mathcal{L}}{\partial U^k}$\; \nllabel{opt:update}
}
\Return $U^N$ \nllabel{opt:return}
\end{algorithm}

In the presence of motion uncertainty (\autoref{eq:ground-truth-dynamics}), executing the control will lead the system to deviate from the nominal state $x_i$ to the true state $x_i^{\text{gt}} \in \mathcal{B}_\delta(x_i)$, where $\mathcal{B}_\delta(x_i)$ denotes the neighborhood of $x_i$ defined as a ball of radius $\delta$.
As shown by the solid red line in \autoref{fig:optimization}~\textbf{(a)}, this deviation can accumulate along the trajectory.
\autoref{prop:recovery_segment} in \autoref{sec:theory} proves that there exists a recovery trajectory segment, generated by a control function $u_{\text{rec}}$, from the perturbed state back toward the nominal trajectory.
This motivates the use of numerical optimization to search for a local control that best approximates such a recovery by minimizing the terminal propagation error.

While the execution is underway, this module samples a batch of states $\X_\mathcal{B}$ in the neighborhood of the next successor state (\autoref{alg:aura}, \autoref{aura:get_children} - \autoref{aura:sample_nearby}) as candidates for possible rollout outcomes, illustrated as green states in~\autoref{fig:optimization}. For each sampled candidate, the next planned control provided by the planner (solid lines in \autoref{fig:optimization}~\textbf{(b)}) serves as an initial guess for an optimization process that adjusts control inputs to get closer to the next children (dashed line in~\autoref{fig:optimization}~\textbf{(b)}).


\autoref{alg:optimization} shows the optimization procedure, which is performed over all pairs between the $b$ sampled states in the batch $\X_\mathcal{B}$ and the $c$ child states in $\X_c$. The optimizer computes one recovery control for each sampled state--child pair, producing a set of $b \times c$ optimized controls.
To enable efficient GPU-based parallel optimization, the sample--child inputs are vectorized into aligned matrices. Each sampled state is repeated $c$ times, once for each child state, while the child states and corresponding original planned controls are repeated for each of the $b$ sampled states. This forms three aligned matrices with $b \times c$ entries: initial states $X_{\text{start}}$, target states $X_{\text{children}}$, and original controls $U^0$ (\autoref{alg:optimization}, \autoref{opt:assemble}).

To numerically estimate $u_{\text{rec}}$ for each sample-child combination and minimize the error $\| \Gamma(X_{\text{start}}, U) - X_{\text{children}} \|$, gradient descent is employed using the derivative of the dynamics with respect to the control inputs (\autoref{alg:optimization}, \autoref{opt:for_loop} - \autoref{opt:update}). This general formulation makes \method compatible with any differentiable dynamics equation, including analytical functions and learned neural network models.

\subsection{Synchronization}
At this stage of the process, the execution thread has advanced the system to a new state $x_{\text{curr}}$, the global replanning process has expanded the tree from the new root, and the optimization process has generated a $\U_{\text{opt}}$ set to mitigate deviation.

After the execution, any small deviations can alter the relative costs of previously computed trajectories. Therefore, at the end of each cycle, the costs of the stored candidate solutions are recalculated (\autoref{alg:aura}, \autoref{aura:recalc_cost}) based on the current state.
%
%
Accordingly, the next state along the current best trajectory $\pi^\star$ is chosen as the next target state (\autoref{alg:aura}, \autoref{aura:get_best2} - \autoref{aura:get_next_state2}).
Additionally, the sampled state in $\X_\mathcal{B}$ that is closest to the current system state $x_{\text{curr}}$ is selected as the best estimate (\autoref{alg:aura}, \autoref{aura:nearest}).
Based on this estimate and the target state, the best optimized control is selected from the precomputed set $\mathcal{U}_{\text{opt}}$ to be executed in the next cycle (\autoref{alg:aura}, \autoref{aura:set_control}).

%% file: sections/5.Theoretical.tex
\section{Recovery Control Existence Proof}
\label{sec:theory}

This section proves the existence of a recovery control that takes a perturbed state back toward the nominal successor along a planned trajectory, as shown in~\autoref{fig:optimization} \textbf{(a)}. 

%
\begin{assumption}[]
\label{as:lipschitz}
The system dynamics $\Gamma$ is Lipschitz continuous in both the state and control variables. Formally, $\exists K^\Gamma_x, K^\Gamma_u \ge 0$ such that, $\forall x, x' \in \X$, $\forall u, u' \in \U$:
\begin{equation}
\begin{gathered}
\label{eq:lip_f}
\|\Gamma(x, u) - \Gamma(x', u)\| \le K^\Gamma_x\|x - x'\|, \\
\|\Gamma(x, u) - \Gamma(x, u')\| \le K^\Gamma_u\|u - u'\|.
\end{gathered}
\end{equation}
\end{assumption}

\begin{assumption}[]
\label{as:stla}
The system satisfies Chow's condition~\cite{sst}.
\end{assumption}
The Chow's condition implies \emph{small-time local accessibility (STLA)}, meaning that the reachable set from a state contains an open neighborhood around that state. 

Such assumptions are standard in the analysis of kinodynamic sampling-based planners. Under these assumptions, we borrow the following result from~\cite[Lemma 6, Definition 7]{sst}.

\begin{lemma}[]
\label{lem:dynamic_clearance}
Let $\boldsymbol{x}$ be a state trajectory for a system satisfying~\autoref{as:lipschitz} and~\autoref{as:stla}. Then there exists a positive value $\delta_0$, termed dynamic clearance, such that for every $\eta \in (0, \eta_0]$, every $x'_0 \in \mathcal{B}_\delta(x_0)$, and every $x'_T \in \mathcal{B}_\delta(x_T)$, there exists a dynamically feasible state trajectory $\hat{\boldsymbol{x}}$ satisfying $\hat{x}_0 = x'_0$ and $\hat{x}_T = x'_T$. The trajectory is called $\delta$-robust if both its obstacle clearance $\epsilon$, i.e., minimum distance from obstacles over all trajectory states, and its dynamic clearance $\eta_0$ are greater than $\delta$.
\end{lemma}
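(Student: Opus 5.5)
The plan is to build $\hat{\boldsymbol{x}}$ in two pieces. First, follow the nominal controls from the perturbed start. Second, use a short final correction supplied by small-time local accessibility to land exactly on $x'_T$. The dynamic clearance $\delta_0$ is then chosen small enough that the error left after the first piece fits inside the region that the final correction can reach.

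\textbf{Step 1: uniform accessibility radius.} By \autoref{as:stla}, Chow's condition gives STLA. For every state $y$ and every horizon $\tau>0$, the set reachable from $y$ within time $\tau$ contains an open ball $\mathcal{B}_{r(y,\tau)}(y)$. I fix a short correction horizon $\tau$, for instance a small number of $\Delta t$ intervals taken from the end of the trajectory. I then want a single radius $r>0$ that works for every $y$ in a closed tube around the finitely many states $x_0,\dots,x_T$. The plan is to argue that $y\mapsto r(y,\tau)$ can be taken lower semicontinuous. The reason is that, by \autoref{as:lipschitz}, the endpoint of a fixed control sequence depends continuously on its initial state, so an open reachable ball around $y$ persists, slightly shrunk, for nearby starting points. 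Compactness of the tube then gives $r=\inf r(y,\tau)>0$.

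\textbf{Step 2: propagation of the start perturbation.} Let $x'_0\in\mathcal{B}_\delta(x_0)$. I apply the nominal controls $u_0,\dots,u_{T-k-1}$, leaving the last $k$ intervals for the correction. Iterating the state-Lipschitz bound in \autoref{eq:lip_f} gives
\begin{equation*}
\|\hat{x}_i - x_i\|\le (K^\Gamma_x)^i\,\delta .
\end{equation*}
The resulting state $y=\hat{x}_{T-k}$ therefore lies within $(K^\Gamma_x)^{T-k}\delta$ of $x_{T-k}$. By the same Lipschitz argument, $x_T$ lies within a controlled distance $C\tau$ of $x_{T-k}$, where $C$ depends on the bound of $f$ along the tube. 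This gives
\begin{equation*}
\|x'_T - y\|\le (K^\Gamma_x)^{T-k}\delta + C\tau + \delta .
\end{equation*}

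\textbf{Step 3: choice of $\tau$ and $\delta_0$.} I choose $\tau$ so that $C\tau<r/2$, and then define $\delta_0$ by
\begin{equation*}
\bigl((K^\Gamma_x)^{T-k}+1\bigr)\delta_0<r/2 .
\end{equation*}
With these choices, for every $\delta\le\delta_0$ the target $x'_T$ lies in $\mathcal{B}_r(y)$. Step 1 then supplies a correction control that drives $y$ to $x'_T$. Concatenating the two pieces gives the required dynamically feasible $\hat{\boldsymbol{x}}$ with $\hat{x}_0=x'_0$ and $\hat{x}_T=x'_T$. The $\delta$-robust notion is a definition and needs no proof. If the clearances exceed $\delta$, the same tube bounds show that $\hat{\boldsymbol{x}}$ stays within distance less than the obstacle clearance $\epsilon$ of $\boldsymbol{x}$, and hence stays collision-free. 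Doing this requires shrinking $\delta_0$ and $\tau$ once more so that the correction piece also stays in the tube.

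\textbf{Main obstacle.} I expect Step 1 to be the hardest part. STLA is a pointwise statement, and it is phrased for arbitrary measurable controls, whereas here the controls are piecewise constant over intervals $\Delta t$. Two things must be justified. First, the reachable ball must persist with a uniform radius under the piecewise-constant discretization. Second, the correction must end at exactly the same time index $T$. My first attempt would follow the argument of \cite{sst}: apply Chow's condition through a Lie-bracket approximation by piecewise-constant switching, and obtain uniformity from the openness argument of Step 1 combined with compactness. A second issue is that the statement as written mixes $\eta$ and $\delta$. I would read $\eta$ as the clearance parameter bounded by $\delta_0$, so that every $\delta\in(0,\delta_0]$ admits the connection.
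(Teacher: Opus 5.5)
The paper does not prove this lemma; it imports it from \cite{sst} (Lemma~6 and Definition~7). As a self-contained proof, your proposal fails at Step~3. In Step~1 you fix the correction horizon $\tau$ and only then obtain $r=r(\tau)$. In Step~3 you choose $\tau$ so that $C\tau<r/2$. This is circular, and it cannot be arranged. If $C$ bounds $\|f\|$ near the trajectory over all admissible controls, then for small $\tau$ everything reachable from $y$ within time $\tau$ lies in the closed ball of radius $C\tau$ about $y$. Hence $r(\tau)\le C\tau$, and $C\tau<r/2$ is impossible. For systems whose Chow condition genuinely needs Lie brackets, such as the kinematic car, $r(\tau)$ shrinks like $\tau^2$, so no small $\tau$ works even if $C$ only bounds the nominal speed.

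The cause is your decomposition. Stopping the nominal controls at index $T-k$ forces the correction to absorb the first-order nominal displacement from $x_{T-k}$ to $x_T$. A ball of radius $r(\tau)$ centred at $y$ cannot be guaranteed to contain that. The repair is to keep all $T$ nominal controls, so that $\|\hat{x}_T-x_T\|\le (K^\Gamma_x)^T\delta$, and then append a correction of duration at most $\tau$, for any fixed $\tau$. Since $\|x'_T-\hat{x}_T\|\le\bigl((K^\Gamma_x)^T+1\bigr)\delta$, it suffices to take $\bigl((K^\Gamma_x)^T+1\bigr)\delta_0<r$, and no $C\tau$ term appears. The resulting trajectory reaches $x'_T$ at time $T\Delta t+t'$ with $t'\le\tau$. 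This is the free-duration form of the result in \cite{sst}, and it is all that Proposition~\ref{prop:recovery_segment} uses. The fixed-index version you worried about is not claimed there. It also does not follow from reachability ``within time $\tau$'', which only bounds the arrival time from above.

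Your justification of a uniform radius in Step~1 is also insufficient as stated. Continuous, even uniformly Lipschitz, dependence of each endpoint on the initial state only makes the reachable set from a nearby $y'$ Hausdorff-close to the one from $y$. A set that is Hausdorff-close to one containing a ball need not contain any ball. You need a robust covering property. One way is a Brouwer-degree argument for the endpoint map of a finite-dimensional family of piecewise-constant controls from $x_T$, which is the kind of object the Lie-bracket switching construction supplies. Degree invariance under the uniform $O(\|y-x_T\|)$ perturbation given by Assumption~\ref{as:lipschitz} then yields a radius valid for all $y$ near $x_T$. After the repair you need this only near the single point $x_T$, so the infimum over a tube around all of $x_0,\dots,x_T$ is unnecessary. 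Finally, your collision-freeness remark is not part of the lemma. It also does not follow from clearance exceeding $\delta$, because your tube has radius up to $(K^\Gamma_x)^T\delta$ rather than $\delta$.
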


With~\autoref{lem:dynamic_clearance}, we can obtain the following proposition.

\begin{proposition}[Existence of a Recovery Segment]
\label{prop:recovery_segment}
Let $\overline{x_i x_{i+1}}$ be a segment from $x_i$ to $x_{i+1}$ along a $\delta_0$-robust nominal state trajectory.
Then for any perturbed state \(x_i' \in \mathcal{B}_\delta(x_i)\) with \(\delta \le \delta_0\), there exists a dynamically feasible state trajectory segment $\overline{x_i' x_{i+1}}$ from \(x_i'\) to \(x_{i+1}\).
\end{proposition}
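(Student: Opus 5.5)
The plan is to obtain the recovery segment as a direct specialization of \autoref{lem:dynamic_clearance}: view the single nominal segment $\overline{x_i x_{i+1}}$ as a (short) state trajectory in its own right, and choose the two endpoint perturbations in that lemma appropriately. First, I restrict the $\delta_0$-robust nominal trajectory to the two-state trajectory $\boldsymbol{x}^{(i)} = \{x_i, u_i, x_{i+1}\}$. It is dynamically feasible because $x_{i+1} = \Gamma(x_i,u_i)$ by \autoref{eq:discrete-dynamics}. It also inherits both clearances: obstacle clearance is a minimum over trajectory states, and a subset of states has clearance at least as large. For the dynamic clearance, the neighborhoods that \autoref{lem:dynamic_clearance} produces for the full trajectory can be used along the subsegment, since the construction in~\cite{sst} works segment by segment. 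Hence $\boldsymbol{x}^{(i)}$ is itself $\delta_0$-robust, and \autoref{lem:dynamic_clearance} applies to it with every radius up to $\delta_0$.

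Second, I instantiate the lemma with the initial endpoint $x'_0 := x_i' \in \mathcal{B}_\delta(x_i)$, which is allowed because $\delta \le \delta_0$. For the terminal endpoint I take $x'_T := x_{i+1}$, which trivially lies in $\mathcal{B}_\delta(x_{i+1})$ since its distance from the center is zero. The lemma then yields a dynamically feasible trajectory $\hat{\boldsymbol{x}}$ with $\hat{x}_0 = x_i'$ and $\hat{x}_T = x_{i+1}$. This trajectory is exactly the required segment $\overline{x_i' x_{i+1}}$, generated by some control function $u_{\text{rec}}$.

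Third, I would add a short remark that the segment stays in $\X_{\text{free}}$. The obstacle clearance of the nominal segment exceeds $\delta_0 \ge \delta$, and the constructed trajectory remains in the $\delta$-tube around it, as in the construction of~\cite[Lemma 6]{sst}. This remark is not strictly required by the statement, but it justifies the word ``recovery''.

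The main obstacle is the first step, namely justifying that the dynamic clearance of the whole trajectory transfers to a single segment. A secondary issue is the mismatch between $\eta$/$\eta_0$ and $\delta$/$\delta_0$ in the statement of \autoref{lem:dynamic_clearance}. I would resolve the notation by reading $\eta_0$ as $\delta_0$ and $\eta$ as $\delta$, so that ``$\delta_0$-robust'' in the proposition means the dynamic clearance is at least $\delta_0$. I would then argue the transfer through STLA (\autoref{as:stla}) and Lipschitz continuity (\autoref{as:lipschitz}), following the proof in~\cite{sst}: the reachable set from any state within $\delta_0$ of $x_i$ contains a neighborhood of $x_{i+1}$. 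If that fails, I would fall back to applying the lemma to the whole trajectory starting at $x_i$ and truncating the resulting $\hat{\boldsymbol{x}}$ at the state where it hits $x_{i+1}$. Note that the recovery segment may then need more than one piecewise-constant control interval; the statement only asks for a feasible segment, so this is acceptable.
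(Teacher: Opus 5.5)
Your second step is exactly the paper's proof: invoke Lemma~\ref{lem:dynamic_clearance} on the segment with initial state $x_i' \in \mathcal{B}_\delta(x_i)$ and terminal state $x_{i+1}$, which trivially lies in $\mathcal{B}_\delta(x_{i+1})$. The paper simply treats $\overline{x_i x_{i+1}}$ itself as $\delta_0$-robust, so your first step is not needed under that reading, and its ``segment by segment'' transfer argument is only heuristic; drop the truncation fallback, because the lemma constrains only the endpoints of $\hat{\boldsymbol{x}}$ and gives no reason for it to pass through $x_{i+1}$, and the tail trajectory would face the same clearance-transfer issue anyway.
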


\begin{proof}
Consider a nominal $\delta_0$-robust trajectory segment \(\overline{x_i x_{i+1}}\).
By \autoref{lem:dynamic_clearance}, for any \(\delta \in (0,\delta_0]\), any initial state \(\hat{x}_0 \in \mathcal{B}_\delta(x_i)\), and any terminal state \(\hat{x}_T \in \mathcal{B}_\delta(x_{i+1})\), there exists a dynamically feasible trajectory segment connecting \(\hat{x}_0\) to \(\hat{x}_T\).
Now choose \(\hat{x}_0 = x_i'\), where \(x_i' \in \mathcal{B}_\delta(x_i)\) by assumption, and choose \mbox{\(\hat{x}_T = x_{i+1}\)}.
Since \(x_{i+1} \in \mathcal{B}_\delta(x_{i+1})\) for any \(\delta > 0\), there exists a dynamically feasible trajectory segment from \(x_i'\) to \(x_{i+1}\).
\end{proof}

This result guarantees the existence of a local recovery trajectory segment, together with an admissible recovery control function $u_{\text{rec}}$ that generates it.
Our optimization module does not claim to recover $u_{\text{rec}}$ exactly from the nominal initialization. Instead, it is a practical local search procedure that attempts to approximate it with $u_{\text{opt}}$ by minimizing the terminal propagation error $\|\Gamma(x'_i, u_{\text{opt}}) - x_{i + 1}\|^2$.
Therefore, the theory motivates the objective of the optimization, while the optimization module (\autoref{sec:aura-opt}) serves as a numerical approximation mechanism.

Additionally, \autoref{prop:approx_recovery} shows that exact recovery is not required.
It is sufficient for the local optimization module to reduce the predicted successor error to a smaller ball around the nominal successor, provided that the remaining execution error is correspondingly bounded, which is practically reasonable.
If the execution errors were routinely on the order of $\delta$, then a planned $\delta$-robust trajectory would itself be unreliable, since the system could easily leave the clearance region and collide with obstacles.

\begin{figure*}[ht]
    \centering
    \includegraphics[width=0.85\textwidth]{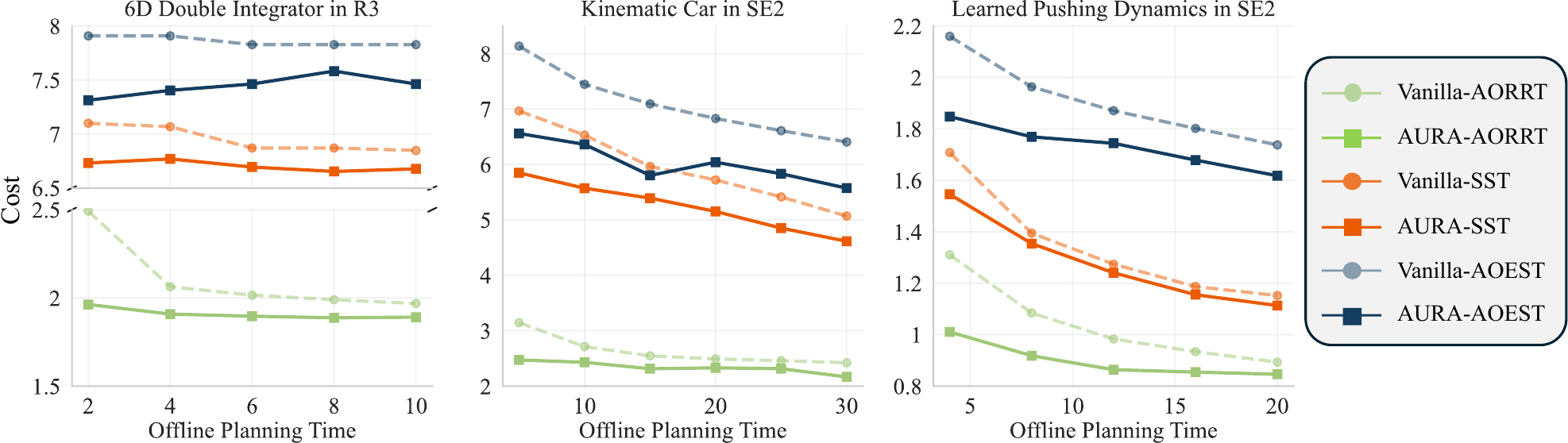}
    \vspace{-1ex}
    \caption{\textbf{Trajectory quality comparison.} The final cost function value of the solution path is represented for three planners across three systems. When integrated with \method, any AO planner can further improve trajectory quality in the runtime phase, given any initial planning time.}
    \label{fig:cost-comparison}
\vspace{-2ex}
\end{figure*}

\begin{proposition}[Approximate Recovery Suffices]
\label{prop:approx_recovery}
Let \(x_i' \in \mathcal{B}_\delta(x_i)\), and suppose the local optimization module returns a control \(u_{\mathrm{opt}}\) such that $\Gamma(x_i', u_{\mathrm{opt}}) \in \mathcal{B}_{\lambda\delta}(x_{i+1})$ for some \(\lambda \in (0,1)\).
If the execution error $w$ of this optimized segment is bounded by $(1-\lambda)\delta$, that is,
$\|x'_{i+1} - \Gamma(x_i', u_{\mathrm{opt}})\| \le (1-\lambda)\delta,$
where $x'_{i+1}$ denotes the executed successor state, then we have 
$ 
x'_{i+1} \in \mathcal{B}_\delta(x_{i+1}).
$ 
\end{proposition}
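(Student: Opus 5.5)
The plan is a single application of the triangle inequality in the norm used to define the balls $\mathcal{B}_r(\cdot)$. Introduce the predicted successor $\hat{x}_{i+1} := \Gamma(x_i', u_{\mathrm{opt}})$ as an intermediate point between the executed successor $x'_{i+1}$ and the nominal successor $x_{i+1}$. The triangle inequality then gives
\begin{equation*}
\|x'_{i+1} - x_{i+1}\| \le \|x'_{i+1} - \hat{x}_{i+1}\| + \|\hat{x}_{i+1} - x_{i+1}\|.
\end{equation*}

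The two terms are bounded separately. The first term is the execution error of the optimized segment, written in the form of~\autoref{eq:ground-truth-dynamics} with $w = x'_{i+1} - \Gamma(x_i', u_{\mathrm{opt}})$. By hypothesis it is at most $(1-\lambda)\delta$. The second term is at most $\lambda\delta$, because the statement assumes $\hat{x}_{i+1} \in \mathcal{B}_{\lambda\delta}(x_{i+1})$. Adding the two bounds gives $\|x'_{i+1} - x_{i+1}\| \le \delta$, so $x'_{i+1} \in \mathcal{B}_\delta(x_{i+1})$.

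The only point needing care is whether the balls are open or closed. If $\mathcal{B}_\delta$ is the closed ball, the inequality $\le \delta$ suffices directly. If the balls are open, the strict hypothesis $\|\hat{x}_{i+1} - x_{i+1}\| < \lambda\delta$ makes the sum strictly less than $\delta$, and the conclusion follows again. I will state the argument with the closed-ball convention, since the paper writes the error bound as $\|w_i\| \le \delta$.

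The proposition does not need the Lipschitz constants from~\autoref{as:lipschitz} or the dynamic clearance of~\autoref{lem:dynamic_clearance}. These results matter only for the follow-up remark. Since $x'_{i+1}$ again lies in $\mathcal{B}_\delta(x_{i+1})$ with $\delta \le \delta_0$, \autoref{prop:recovery_segment} can be reapplied at step $i+1$. Induction along the trajectory then keeps the executed states inside the $\delta$-tube of a $\delta_0$-robust trajectory, and therefore collision-free.
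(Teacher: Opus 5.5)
Your proof is correct and is exactly the paper's argument: insert $\Gamma(x_i',u_{\mathrm{opt}})$ as the intermediate point, apply the triangle inequality, and bound the two terms by $(1-\lambda)\delta$ and $\lambda\delta$ (the paper implicitly uses the closed-ball reading, so your open/closed remark is a harmless refinement). Your closing induction remark goes beyond the statement. It would need the optimizer's $\lambda\delta$ guarantee and the $(1-\lambda)\delta$ execution bound to hold at every step, and Proposition~\ref{prop:recovery_segment} alone does not supply that.
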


\begin{proof}
Applying the triangle inequality gives
\begin{equation}
\begin{aligned}
&\quad\,\, \|x'_{i+1} - x_{i+1}\|
\\
&\le
\|x'_{i+1} - \Gamma(x_i', u_{\mathrm{opt}})\|
+
\|\Gamma(x_i', u_{\mathrm{opt}}) - x_{i+1}\|
\\&\le
(1-\lambda)\delta + \lambda\delta = \delta.
\end{aligned}
\end{equation}
Hence $x'_{i+1} \in \mathcal{B}_\delta(x_{i+1})$.
\end{proof}

%

%% file: sections/6.Experiments.tex
\section{Experiments and Results}
This section presents a detailed experimental evaluation of \method on multiple systems with different dynamics in both simulation and real-world environments. The results demonstrate the effect of different modules on the overall performance.
All the experiments were run on a workstation with an Ultra7-265KF CPU, an NVIDIA RTX4070 Ti Super GPU, and 32 GB of RAM.

To demonstrate the generalizability of the proposed method, systems with different nominal dynamics and state spaces were investigated, including:
\begin{itemize}
    \item \textbf{6D Double Integrator:} A single-point acceleration control model used for forward propagation, with state space \mbox{$x = [p_x, p_y, p_z, \dot{p}_x, \dot{p}_y, \dot{p}_z]^\top \in \mathcal{X} \subseteq \R{6}$} where $p$ denotes position, and control space $u = [\ddot{p}_x, \ddot{p}_y, \ddot{p}_z]^\top \in \mathcal{U} \subseteq \R{3}$. 
    \item \textbf{Kinematic Car:} A planar car with state $x = [p_x, p_y, \theta]^\top \in \mathcal{X} \subseteq SE2$ and control $u = [v, \phi]^\top \in \mathcal{U} \subseteq \R{2}$, where $\theta$, $v$, and $\phi$ denote yaw, linear velocity, and steering angle, respectively. The dynamics follow the model in~\cite{car-kinematics}.
    \item \textbf{Learned Non-prehensile Pushing Dynamics:} A learned dynamics of contact-rich planar pushing following the model in~\cite{activepusher}, with state $x = [p_x, p_y, \theta]^\top \in \mathcal{X} \subseteq SE2$ and control $u = [u_s, u_o, u_d]^\top \in \mathcal{U} \subseteq \R{3}$, where $u_s$ selects the pushing side, $u_o$ defines the lateral offset along the contacted side, and $u_d$ specifies the pushing distance.
\end{itemize}

The robustness of \method was evaluated across environments with different forms of execution uncertainty $w$ (\autoref{eq:ground-truth-dynamics}). 
 \begin{itemize}
     \item \textbf{Gaussian Noise:} Execution uncertainty is modeled as bounded additive Gaussian noise applied to the nominal motion of each system.
     \item \textbf{Mujoco Simulation:} Uncertainty arises from the mismatch between the nominal dynamics and the MuJoCo simulation dynamics~\cite{mujoco}, evaluated using a MuSHR car model~\cite{mushr} (\autoref{fig:environments} (a)) and a UR10 manipulation task (\autoref{fig:environments} (b)).
     \item \textbf{Real-World Robot:} Hardware inaccuracies and unmodeled dynamics introduce uncertainty in a tabletop manipulation task (\autoref{fig:environments} (c)).
 \end{itemize}

For each system and environment, three AO planners implemented in the \textit{Open Motion Planning Library (OMPL)}~\cite{ompl} were used: SST~\cite{sst}, AO-RRT, and AO-EST~\cite{ao-rrt}.

\begin{figure}[h!]
  \centering
  \vspace{-2ex}
  \includegraphics[width=1.0\linewidth]{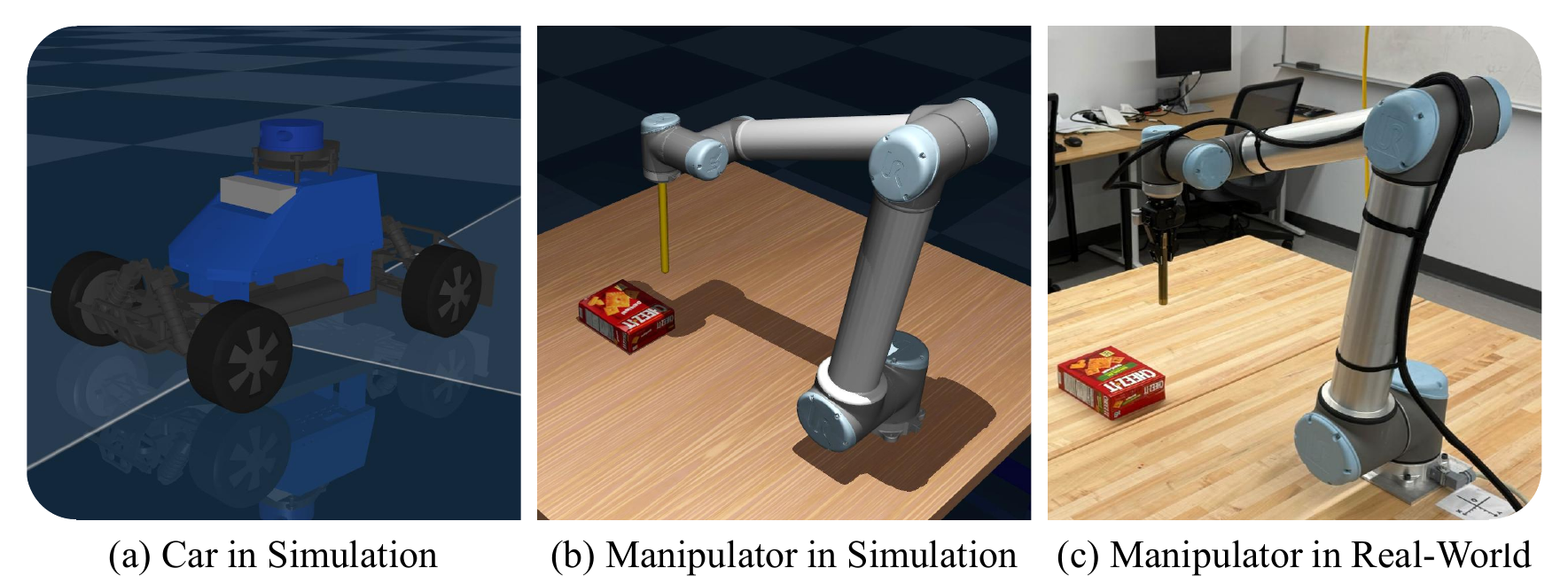}
  \vspace{-4ex}
  \caption{\textbf{Evaluation environments.} (a) MuJoCo MuSHR car, (b) MuJoCo UR10 non-prehensile task, and (c) Real-world UR10 non-prehensile task.}
  \label{fig:environments}
\end{figure}




\begin{figure*}[ht]
    \centering
    \includegraphics[width=\textwidth]{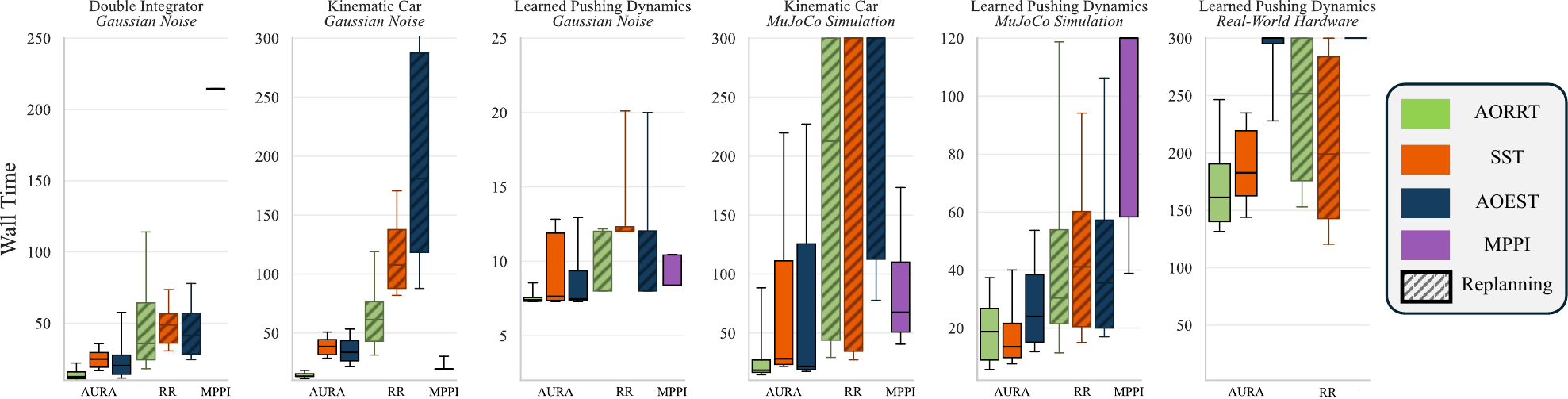}
    \vspace{-2ex}
    \caption{\textbf{Task time comparison.} The distribution of overall task completion time (in seconds) over multiple trials for \method, restart replanning, and MPPI is shown. Lower values indicate faster task completion and better performance. \method consistently achieves lower wall time and reduced variability compared to other baselines, with pronounced gains in complex and long-horizon scenarios.}
    \label{fig:wall-time-comparison}
\end{figure*}

\subsection{Runtime Trajectory Quality Improvement}
\label{sec:cost_comparison}
As discussed in~\autoref{sec:replanning}, \method planners utilize each execution interval $\Delta t$ to continue the global search from the retained planning tree, progressively improving the cost of the candidate trajectory until the system reaches the goal.
This experiment evaluates the attained improvement in trajectory cost $J(\pi)$ under a noise-free execution setting ($\|w\|=0$), thereby isolating the effect of runtime global refinement on solution quality from the effects of tracking error.
The baseline is the initial trajectory generated by the corresponding vanilla AO planner during the offline phase, as shown in~\autoref{fig:cost-comparison}.
Results are averaged over 50 trials for each system, planner, and initial planning time.
As mentioned in~\autoref{sec:replanning}, \method planners utilize each execution interval $\Delta t$ to perform global replanning, progressively improving the candidate trajectory cost, until the system reaches the goal.

For all the systems, \method consistently produced lower-cost trajectories (vertical axis) than vanilla planners, given the same offline planning time.
Notably, for the kinematic car, 5 seconds of offline planning with \method could achieve a trajectory quality comparable to that obtained from 30 seconds of offline planning.
A similar trend is observed across all systems, including the learned dynamics model, where finding a solution is more challenging.
These outcomes are consistent with the theoretical properties of asymptotic optimality~\cite{asymptotically-optimal}.

\begin{table}[h]
\centering
\caption{Mean Step-wise Tracking Error}
\vspace{-1ex}
\label{tab:tracking_error}
\setlength{\tabcolsep}{3pt}
\renewcommand{\arraystretch}{1.5}
\footnotesize
\resizebox{\columnwidth}{!}{%
\begin{tabular}{llcc}
\hline
\textbf{System} & \textbf{Environment} & \textbf{AURA} & \textbf{Open-loop Execution} \\
\hline
Kinematic Car & Gaussian Noise & \textbf{0.0188} & 0.0689 \\
Double Integrator & Gaussian Noise & \textbf{0.0048} & 0.0130 \\
Pushing Dynamics & Gaussian Noise & \textbf{0.0515} & 0.0744 \\
Kinematic Car & MuJoCo Simulation & \textbf{0.0475} & 0.0894 \\
Pushing Dynamics & MuJoCo Simulation & \textbf{0.1051} & 0.1495 \\
Pushing Dynamics & Real-World & \textbf{0.0873} & 0.1856 \\
\hline
\end{tabular}
}
\end{table}

\subsection{Tracking Robustness}
To examine how local optimization affects execution under uncertainty, step-wise tracking error was measured using~\autoref{eq:tracking_error}.
This evaluation includes execution uncertainty, and the system is tasked with following an arbitrary reference trajectory generated by the nominal dynamics (\autoref{eq:discrete-dynamics}) using 10 consecutive randomly sampled control inputs. For real-world trials, this is reduced to 5 pushes. Results are averaged over 5 trials.
The comparison is made against open-loop execution, in which the nominal controls are applied directly. In \method, the best optimized control is selected at each step from $\mathcal{U}_{\text{opt}}$ based on the current observed state and the next target state, as described in~\autoref{sec:aura-opt}.

As reported in~\autoref{tab:tracking_error}, selecting the optimized control substantially decreases tracking error across all systems, yielding a reduction of up to \textbf{72\%} in simulation and \textbf{53\%} in real world.

\subsection{End-to-End Task Time Efficiency}
To evaluate the overall task performance, wall time was measured as the total elapsed time required for each system to reach the goal, under action uncertainty. This includes both the initial offline planning time and the total execution time.

The comparison includes Restart Replanning (RR) and Model Predictive Path Integral (MPPI)~\cite{mppi}. Following common practice in replanning under uncertainty~\cite{planning-from-scratch}, restart replanning is triggered whenever the step-wise tracking error exceeds a threshold. This approach discards prior planning progress and computes a new trajectory from the current observed state to the goal. MPPI is included as a representative receding-horizon optimization baseline.

Results in~\autoref{fig:wall-time-comparison} demonstrate that \method consistently reduces wall time across systems and environments by utilizing global replanning and local optimization modules. Although the RR approach uses the latest ground-truth state for planning, each trigger requires a costly trajectory recomputation. In contrast, \method progressively lowers trajectory cost while reducing tracking error, limiting the need for restart replanning.
The benefit becomes most apparent in more complex settings, such as MuJoCo simulation and real-world pushing tasks, where RR exhibits higher variance and longer execution~times. 

While MPPI achieves competitive performance for the kinematic car, its wall time increases substantially in more complex systems.
This is because MPPI optimizes over a limited horizon without explicit awareness of the global state space or long-term constraints. Therefore, the controller aligns locally feasible states and converges to local minima.
In contrast, the long-horizon planner in \method enables global exploration and trajectory refinement, supporting more efficient task completion in complex systems and environments.

\subsection{Hyperparameter Analysis}
\begin{figure}[t!]
    \centering
    \includegraphics[width=\columnwidth]{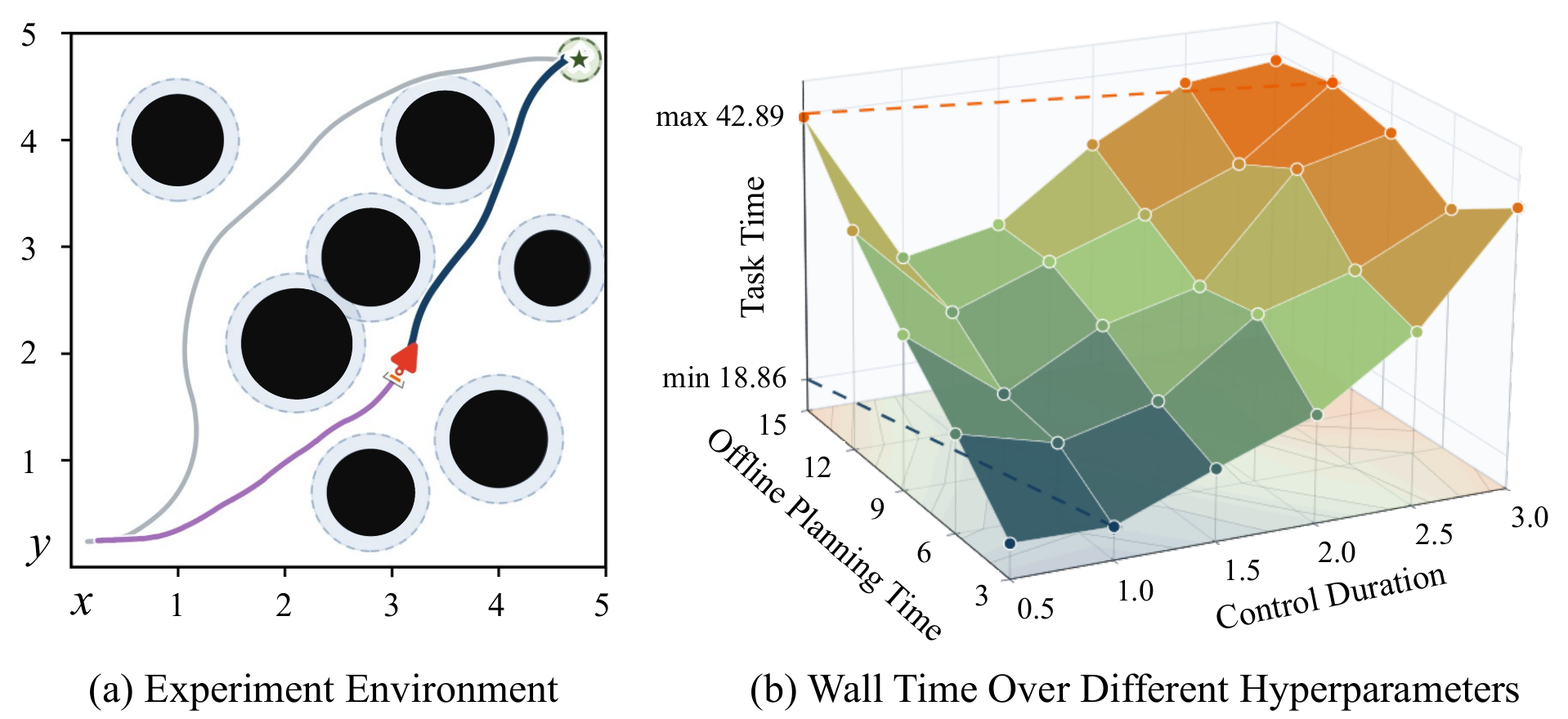}
    \vspace{-3ex}
    \caption{\textbf{Hyperparameters Analysis.} (a) \textit{XY} plane with obstacles (black) and their inflated boundaries (dashed lines). The \textcolor{Red}{red} simulated car is tasked with reaching the goal region (\textcolor{OliveGreen}{Green}). The \textcolor{gray}{gray} trajectory is the offline planning output, the \textcolor{Purple}{purple} segment shows the executed trajectory, and the \textcolor{Blue}{blue} line shows the best current trajectory computed by \method. (b) Effect of different offline planning times and control durations on the overall wall time (z-axis).}
    \vspace{-1ex}
    \label{fig:hyperparameters-study}
\end{figure}
This section analyzes the sensitivity of \method to two critical hyperparameters: \textit{Offline Planning Time}~$T_{\text{init}}$ and \textit{Control~Duration}~$\Delta t$, which set the initial computation budget and runtime interval frequency, respectively.
This experiment was conducted using the kinematic car with AO-RRT in the environment shown in~\autoref{fig:hyperparameters-study}~(a), where the wall time was measured over 10 trials under different hyperparameter settings. In this environment, the optimal solution passes through the narrow passage on the right, while a longer trajectory through the wider region on the left is more likely to be found during early planning.

As shown in ~\autoref{fig:hyperparameters-study} (b), larger offline planning time generally increases wall time. This implies that more initial planning time does not necessarily improve runtime efficiency. Using \method, the initial solution only needs to be feasible to begin execution, as the trajectory is continuously refined during the runtime phase. This allows the system to start with minimal offline computation while improving solution quality during~execution.

The control duration, however, has more influence on wall time. Shorter control durations enable more frequent replanning, allowing the global replanning module to update the trajectory more often. However, excessively short durations reduce the computation time available for local optimization, limiting the uncertainty compensation. Thus, the best performance is obtained at an intermediate control duration (which is 1 second for this system) that balances replanning frequency with sufficient optimization time.

Overall, this analysis indicates that \method's wall time grows with offline planning time with the same trend, across different control durations, once a feasible initial trajectory is available. Beyond that point, any additional offline computation does not translate into improved performance. Instead, performance is primarily determined by the runtime frequency, which provides frequent replanning and enough time for optimization.

%% file: sections/7.Conclusion.tex
\section{Conclusion and Future Work}
This paper presents \method, a meta-planning framework for robust motion planning under execution uncertainty. \method is built on top of asymptotically optimal sampling-based planners and augments them with two complementary online modules: \textbf{(i)} a \textit{Global Replanning Module} that continuously refines the reference trajectory during execution, and \textbf{(ii)} a \textit{Local Optimization Module} that recovers from possible outcomes by relying on forward dynamics propagation. By combining asymptotic path improvement with predictive forward-dynamics optimization, \method improves trajectory quality during execution while mitigating tracking errors under motion uncertainties.

Experimental results across multiple system dynamics and state spaces demonstrate the generality and practical advantages of this framework. In particular, \method improves overall performance in terms of wall time without requiring extensive offline planning, allowing the system to begin execution from a feasible plan and refine it online. The results further show that \method is compatible with both analytical and learned dynamics models, making it applicable to systems where accurate models are difficult to obtain. Furthermore, the real-world experiment demonstrates that \method can handle hardware execution uncertainties. Together, these findings highlight \method as a flexible and computationally efficient approach for improving the robustness and performance of sampling-based motion planning during execution.

Despite the advantages of the proposed method, some limitations remain. The local optimization module may exhibit varying recovery performance depending on the sampled batch set and nominal dynamics complexity. In addition, the hyperparameter studies indicate that performance depends on selecting an appropriate control duration, which must be adapted based on the system's dynamics.
As future work, the planning computations in both the offline phase and the global replanning module could be parallelized~\cite{kino-pax}, accelerating the overall pipeline and enabling the method to better support systems that require shorter control durations.